\newtheorem{theorem}{Theorem}
\newtheorem{proposition}{Proposition}
\newtheorem{definition}{Definition}
\title{DeepFaith: A Domain-Free and Model-Agnostic Unified Framework\\for Highly Faithful Explanations}
\author{
    %Authors
    % All authors must be in the same font size and format.
    Yuhan Guo,
    Lizhong Ding,
    Shihan Jia\equalcontrib,
    Yanyu Ren\equalcontrib,
    Pengqi Li,
    Jiarun Fu,\\
    Changsheng Li,
    Ye yuan,
    Guoren Wang
}
\title{My Publication Title --- Single Author}
\author {
    Author Name
}
\title{My Publication Title --- Multiple Authors}
\author {
    % Authors
    First Author Name\textsuperscript{\rm 1,\rm 2},
    Second Author Name\textsuperscript{\rm 2},
    Third Author Name\textsuperscript{\rm 1}
}
\begin{document}

\maketitle

\begin{abstract}
Explainable AI (XAI) builds trust in complex systems through model attribution methods that reveal the decision rationale. However, due to the absence of a unified optimal explanation, existing XAI methods lack a ground truth for objective evaluation and optimization. To address this issue, we propose \underline{Deep} architecture-based \underline{Faith}ful explainer (\textbf{DeepFaith}), a domain-free and model-agnostic unified explanation framework under the lens of faithfulness. By establishing a unified formulation for multiple widely used and well-validated faithfulness metrics, we derive an optimal explanation objective whose solution simultaneously achieves optimal faithfulness across these metrics, thereby providing a ground truth from a theoretical perspective. We design an explainer learning framework that leverages multiple existing explanation methods, applies deduplicating and filtering to construct high-quality supervised explanation signals, and optimizes both \textit{pattern consistency loss} and \textit{local correlation loss} to train a faithful explainer. Once trained, \textbf{DeepFaith} can generate highly faithful explanations through a single forward pass without accessing the model being explained. On 12 diverse explanation tasks spanning 6 models and 6 datasets, \textbf{DeepFaith} achieves the highest overall faithfulness across 10 metrics compared to all baseline methods, highlighting its effectiveness and cross-domain generalizability.
\end{abstract}

\section{Introduction}

As deep learning models are increasingly applied in high-risk fields such as healthcare \cite{health2, health3}, finance \cite{finance1, finance3}, and criminal justice \cite{jus1, jus2}, eXplainable Artificial Intelligence (XAI) has become a core requirement to ensure their trustworthiness, fairness, and safety \cite{xai1, xai2, xai3}. However, the explainability of machine learning faces the fundamental challenge of the absence of a \textit{Ground Truth} \cite{Li2023}, leading to different explanation methods relying on manually set prior assumptions \cite{Selvaraju1610, Lundberg1705, chen2024less}, resulting in a lack of a unified optimization objective.

\begin{figure}[ht]
\centering
\label{layerResult:subfig2}\includegraphics[width=0.96\columnwidth]{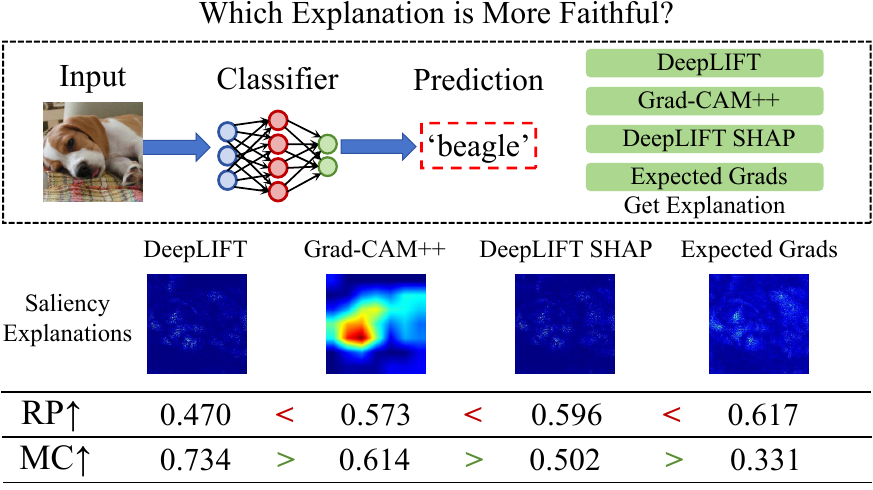}
\caption{Explanations from four methods for an image classifier prediction, along with their faithfulness scores assessed by Region Perturbation (RP) \cite{Samek1509} and Monotonicity Correlation (MC) \cite{Nguyen2007}, with higher values indicating greater faithfulness.}
\label{faithConf}
\end{figure}

Faithfulness evaluation \cite{Bhatt3016, Dasgupta2022} quantifies the alignment between explanations and model decisions via perturbation experiments, offering a practical alternative to ground truth \cite{Li2023}. However, as shown in Figure \ref{faithConf}, different metrics often produce conflicting results \cite{Klein2024}, providing little unified guidance for explanation optimization and leaving the issue unresolved.

\begin{table*}[htb]
\centering
{\fontsize{9pt}{14pt}\selectfont
\setlength{\tabcolsep}{2pt}
    \begin{tabular}{lcccr}
    \toprule
    Metric & Input & Formula & Output \\
    \midrule
     Faithfulness Correlation (FC)  & $s;x,f$     & $\tau\left [ \left (\sum_{i\in\mathcal{I}}s_i\right )_{\mathcal{I}\subseteq [n]},\left ( \Delta \left [ f(x),f(x\setminus \mathcal{I}) \right ]\right )_{\mathcal{I}\subseteq [n]} \right ]$     & $[-1,1]$ \\
    Faithfulness Estimate (FE)     & $S_f;\{x^{(i)},\mathcal{I}_i\}_{i=1}^N,f$     & $\tau \left [  (\sum_{j\in\mathcal{I}_i}S_f(x^{(i)})_j )_{i=1}^N,( \Delta \left [ f(x^{(i)}),f(x^{(i)}\setminus \mathcal{I}_i) \right ] )_{i=1}^N \right ]$     & $[-1,1]$  \\
    Infidelity (INF)     & $s;x,\{\mathcal{I}_i\sim \mathcal{P}([n])\}_{i=1}^N,f$     & $ \tau [  ( \sum_{j\in\mathcal{I}_i}s_j  )_{i=1}^N, \left ( \Delta \left [ f(x),f(x\setminus \mathcal{I}_i) \right ] )_{i=1}^N \right ]$     & $[-1,1]$  \\
    Monotonicity Correlation (MC)     & $s;x,\{\mathcal{I}_i\}_{i=1}^N,f$     & $ \tau [  ( \sum_{j\in\mathcal{I}_i}s_j  )_{i=1}^N, \left ( \Delta \left [ f(x),f(x\setminus \mathcal{I}_i) \right ] )_{i=1}^N \right ]$     & $[-1,1]$  \\
    \midrule
    Deletion Score* (DEL)     & $\pi;x,f$     & $\frac{1}{n}\!\int_{i=0^{+}}^{n} \Delta^- \left [ f(x),f(x\setminus { \bigcup_{j=1}^{\left \lceil i \right \rceil }\pi(j)})\right ] \mathrm{d}i$     & $[0,1]$ \\
    Insertion Score* (INS)     & $\pi;x,f$     & $\frac{1}{n}\int_{i=0^{+}}^{n} \Delta^- \left [ f(x),f(x^{\circ}\cup { \bigcup_{j=1}^{\left \lceil i \right \rceil }\pi(j)})\right ] \mathrm{d}i$     & $[0,1]$  \\
    Negative Perturbation* (NEG)     & $\pi;x,f$     & $\frac{1}{t}\int_{i=0^{+}}^{t} \Delta^- \left [ f(x),f(x\setminus { \bigcup_{j=1}^{\left \lceil i \right \rceil }\overset{\hookleftarrow }{\pi} (j)})\right ] \mathrm{d}i$     & $[0,1]$   \\
    Positive Perturbation* (POS)     & $\pi;x,f$     & $\frac{1}{t}\int_{i=0^{+}}^{t} \Delta^- \left [ f(x),f(x\setminus { \bigcup_{j=1}^{\left \lceil i \right \rceil }\pi(j)})\right ] \mathrm{d}i$     & $[0,1]$  \\
    Region Perturbation (RP)      & $\Pi_f;\{x^{(i)}\}_{i=1}^N,f$     & $\frac{1}{N}\sum_{i=1}^N\! \left (\! \frac{1}{n+1}\!\sum_{j=0}^n \Delta\!\! \left [ f(x^{(i)}),f(x^{(i)}\!\setminus\! { \bigcup_{k=1}^{j}\Pi_f(x^{(i)})(k)} ) \right ] \right )$     & $[0,1]$  \\
    Iterative Removal of Features (IROF)     & $\Pi_f;\{x^{(i)}\}_{i=1}^N,f$     & $\frac{1}{Nn}\!\sum_{i=1}^N \!\!\int_{j=0^+}^n \!1\!-\!\Delta^-\!\!\left [f(x^{(i)}),f(x^{(i)}\!\setminus\! { \bigcup_{k=1}^{\left \lceil j \right \rceil }\Pi_f(x^{(i)})(k)} ) \right ]\mathrm{d}j$     & $[0,1]$  \\
    \bottomrule
    \end{tabular}%
}
\caption{We formalize for the first time four widely used and well-validated faithfulness metrics (*) and re-formalize six ones under our unified framework, including FC \cite{Bhatt3016}, FE \cite{AlvarezMelis1806}, INF \cite{Yeh1901}, and MC for saliency explanations, as well as DEL and INS \cite{Petsiuk1806}, NEG and POS \cite{Barkan2310}, RP, and IROF \cite{rieger2020} for permutation explanations. Here, $\tau$ denotes a correlation metric, $\Delta$ a perturbation effect, $\Delta^-$ a preservation effect, $\mathcal{P}$ the uniform distribution over the power set, $x^{\circ}$ the baseline input, $\overset{\hookleftarrow }{\pi}$ the reversed permutation explanation, and $t$ the least number of perturbations required to change the model prediction significantly.}
\label{faithTable} 
\end{table*}

We observe that various widely used faithfulness metrics can be unified under a specific theoretical framework, which enables deriving an objective for the optimal faithfulness, thus offering a surrogate for ground truth. Furthermore, despite methodological differences, existing explanation techniques consistently capture the functional relationship between input features and model predictions. This shared pattern suggests the feasibility of learning a generalizable mapping from inputs to high-quality explanations.

Building on these insights, we propose \underline{Deep} architecture-based \underline{Faith}ful explainer (\textbf{DeepFaith}), a domain-free and model-agnostic unified framework for generating highly faithful explanations. We rigorously distinguish faithfulness metrics evaluating \textit{saliency} and \textit{permutation} explanations, formalize four empirical ones for the first time, and re-formalize six metrics within our theoretical framework. We propose and prove that a saliency explanation mapping achieves optimal faithfulness across all metrics. Moreover, we design an explainer learning framework that leverages multiple baseline explanation methods to generate explanations and constructs high-quality supervised explanation signals through deduplicating and filtering. Integrating the optimal faithfulness objective and the patterns of supervised explanation signals, we train a deep neural network explainer by optimizing two corresponding loss functions. Once trained, \textbf{DeepFaith} generates highly faithful explanations for inputs via a single forward pass, without accessing the model being explained.

We evaluate \textbf{DeepFaith} on 12 explanation tasks spanning image, text, and tabular modalities, as well as diverse models being explained. Comparative experiments demonstrate that \textbf{DeepFaith} consistently achieves higher faithfulness than baseline methods while providing clear and intuitive visualizations. Furthermore, we provide a runtime efficiency comparison of \textbf{DeepFaith} for explanation inference, along with ablation studies on the two loss components.

\section{Unified Formulation of Faithfulness Metrics}
In this section, we propose a domain-free and model-agnostic framework that unifies multiple widely used and well-validated faithfulness evaluation metrics. Let $f:\mathcal{X}\to \mathcal{Y}$ denote the model to be explained, where the input space $\mathcal{X}\subseteq \mathbb{R}^{n\times d}$ consists of instance $x=(x_1,x_2,\dots,x_n)$ with each element $x_i \in \mathbb{R}^d$. In our experiments: for vision, $x$ is an image of $n$ patches, each $x_i$ representing the $d$-dimensional pixels in a patch; for NLP, $x$ is a sequence of $n$ tokens with $x_i$ as the $d$-dimensional embedding of the $i$-th token; for tabular data, $x$ is a row with $n$ scalar features ($d=1$). The model output $f(x)$ aims to approximate $y \in \mathcal{Y} \subseteq \mathbb{R}$, e.g., the predicted probability for the target class in classification. We use $[n]$ to denote the set $\{1,2,\dots,n\}$, and use $(i)_{i=1}^n$ to denote the vector $(1,2,\dots,n)$.

We begin with the observation that current metrics follow two distinct views: one evaluates the accuracy of attribution values from a saliency perspective \cite{Bhatt3016, AlvarezMelis1806}, while the other assesses the relative importance of input elements from a permutation perspective \cite{Samek1509, rieger2020}. Thus, it is essential to distinguish between explanations under these two perspectives.

\begin{definition}[Saliency Explanation]
A saliency explanation method is defined as a mapping $S_f:\mathcal{X} \to [0,1]^n$ that, given an input $x$ and model $f$, outputs a saliency vector $s = (s_1, s_2, \dots, s_n) \in [0,1]^n$, where each $s_i$ quantifies the contribution of $x_i$ (e.g., a patch, token, or scalar feature) to the prediction $\hat{y} = f(x)$.
\end{definition}

\begin{definition}[Permutation Explanation]
A permutation explanation method is defined as a mapping $\Pi_f:\mathcal{X}\to \mathfrak{S}_n$, where $\mathfrak{S}_n=\{(\pi(i))_{i=1}^n|\{\pi(1),\pi(2),\cdots,\pi(n)\}=[n]\}$ denotes all permutations of $[n]$.\footnote{For clarity, we use $\pi(i)$ to denote the $i$-th element in vector $\pi$.} Given $x$ and model $f$, $\Pi_f$ outputs $\pi \in \mathfrak{S}_n$, indicating that $x_{\pi(i)}$ contributes no less to the model’s prediction than $x_{\pi(i+1)}$.
\end{definition}

Two types of explanations can be interconverted via simple functions: $\mathfrak{P}(s) = {\rm argsort}_\downarrow \{s_1, s_2, \ldots, s_n\}$ represents the descending-order index of $s$, mapping a saliency explanation to a permutation explanation; while $\Sigma(\pi)$ converts a permutation explanation into a saliency explanation, where $\Sigma(\pi)_{\pi(i)} = (n - \pi(i) + 1)/n$. Since a saliency explanation assigns a specific importance score to each $x_i$, while a permutation explanation does not, $\mathfrak{P}(s)$ cannot be recovered back to $s$ through $\Sigma$.

Our unified framework is built upon a notation system derived from a deep understanding of faithfulness evaluation. Let $x \setminus \mathcal{I}$ ($\mathcal{I} \subseteq [n]$) denote input $x$ with sub-elements $\{x_i |i \in \mathcal{I}\}$ removed (via noise substitution \cite{Rong2202}, baseline replacement \cite{Bhatt3016,Bach2015}, or linear interpolation \cite{rieger2020}). We define perturbation effect $\Delta:\mathcal{Y}\times\mathcal{Y}\to [0,1]$ (e.g., $|y^{(1)}-y^{(2)}|$ or $\frac{1}{2}(y^{(1)}-y^{(2)})^2$ \cite{Yeh1901}) and preservation effect $\Delta^-:\mathcal{Y}\times\mathcal{Y}\to [0,1]$, negatively correlated with $\Delta$, measure the extent to which the original prediction is preserved (e.g., $\left | y^{(1)}/y^{(2)} \right |$ \cite{rieger2020} or target class confidence). We also define $\tau:\mathbb{R}^m\times\mathbb{R}^m\to [-1,1]$ to measure correlations between $m$-dimensional vectors, such as Pearson or Spearman coefficients \cite{AlvarezMelis1806, Nguyen2007}.

We re-formalize four saliency perspective faithfulness metrics under our unified framework, as shown in Table \ref{faithTable}. Specifically, FC enumerates all subsets of $[n]$ as perturbation index sets $\mathcal{I}$; FE evaluates $N$ samples, each with a specific $\mathcal{I}$; MC defines a fixed perturbation sequence $\{\mathcal{I}_i\}_{i=1}^N$ on one sample; and INF samples $N$ index sets from a distribution $\mathcal{P}$, which we instantiate as $\mathcal{P}([n]) = \mathrm{Uniform}(2^{[n]})$, a discretized version of the original INF.

For permutation perspective metrics, we reformulate two existing ones and, for the first time, formalize four empirical metrics. In Table \ref{faithTable}, RP perturbs features in descending order of importance and averages the prediction drop; IROF uses the same order and computes the mean area over the curve (AOC) of preservation effects across $N$ samples; DEL and INS respectively remove features from the original input $x$ or insert features into a baseline input $x^{\circ}$ (e.g., blurred input, noise, or zero vector), using the area under the curve (AUC) of preservation effects as faithfulness scores; NEG and POS remove features in ascending or descending order until $t$-th removal leading to prediction changes significantly (e.g., class flips), with AUC used to quantify the effect.

\section{Theoretical Analyses of Optimal Faithfulness}
Building on our unified framework of faithfulness evaluation, we propose and theoretically establish the existence of an \textit{optimal explanation mapping}.

By uncovering that the core idea behind FC, FE, INF, and MC is to evaluate the correlation between the local sum of saliency explanations over perturbed indices and the corresponding perturbation effects, we propose a saliency explanation mapping with optimal faithfulness as follows.
% We observe that the core idea behind FC, FE, INF, and MC is to evaluate the correlation between the local sum of saliency explanations over perturbed indices and the corresponding perturbation effects. Uncovering this underlying principle, we propose a saliency explanation mapping with optimal faithfulness as follows.

\begin{proposition}
\label{saProp}
\textit{Given a model $f$ being explained and its input space $\mathcal{X}$, for a fixed correlation measure \( \tau \) and perturbation effect \( \Delta \), suppose there exists a saliency explanation mapping \( S_f^* \) such that $\forall x\in\mathcal{X}$ and $\forall\{\mathcal{I}_i\subseteq[n]\}_{i=1}^N$,}
\begin{equation}
\label{optSaliency}
%\nonumber
S_f^*\!=\!\underset{S_f}{\mathrm{argmax}}\ \tau\!\! \left [  ({ {\textstyle \sum}_{j\in\mathcal{I}_i}}S_f^*(x) )_{i=1}^N,\!\left ( \Delta[f(x),f(x\!\setminus\!\mathcal{I}_i)] \right )_{i=1}^N \right ],
\end{equation}
then the saliency explanations generated by \( S_f^* \) always achieve optimal faithfulness under the $\mathrm{FC}$, $\mathrm{FE}$, $\mathrm{INF}$, and $\mathrm{MC}$ evaluation metrics.
\end{proposition}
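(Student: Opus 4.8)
The plan is to show that each of the four metrics $\mathrm{FC}$, $\mathrm{FE}$, $\mathrm{INF}$, $\mathrm{MC}$, after the unification in Table~\ref{faithTable}, is nothing but the objective appearing in \eqref{optSaliency} evaluated on a particular family of perturbation index sets, so that the universally quantified hypothesis forces $S_f^*$ to be a maximizer of every one of them. Concretely, for a fixed input $x$ and a family $\{\mathcal{I}_i\}_{i=1}^N$ write
\[
C(S_f;x,\{\mathcal{I}_i\}) \;=\; \tau\!\left[\Big(\textstyle\sum_{j\in\mathcal{I}_i}S_f(x)_j\Big)_{i=1}^N,\ \big(\Delta[f(x),f(x\setminus\mathcal{I}_i)]\big)_{i=1}^N\right].
\]
Equation \eqref{optSaliency} states precisely that $S_f^*\in\mathrm{argmax}_{S_f}\,C(S_f;x,\{\mathcal{I}_i\})$ for every $x$ and every family, and this is the single handle I would use throughout. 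Here ``optimal faithfulness under a metric'' is taken to mean attaining the supremum of that metric over admissible $S_f$.

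First I would dispatch the three single-sample metrics. Inspecting Table~\ref{faithTable}, $\mathrm{FC}$ is $C$ with the family taken to be the enumeration of all subsets $\{\mathcal{I}\subseteq[n]\}$ (so $N=2^n$); $\mathrm{MC}$ is $C$ with the family equal to its prescribed fixed perturbation sequence $\{\mathcal{I}_i\}_{i=1}^N$; and $\mathrm{INF}$ is $C$ evaluated on a family drawn from $\mathcal{P}([n])$. For $\mathrm{FC}$ and $\mathrm{MC}$ the family is deterministic, so instantiating the hypothesis at that very family gives that $S_f^*$ attains $\sup_{S_f}C$, i.e.\ optimal faithfulness. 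For $\mathrm{INF}$ I would note that the hypothesis holds for every realized family; since $S_f^*$ maximizes $C$ for each drawn $\{\mathcal{I}_i\}$, it also maximizes the (possibly expected) $\mathrm{INF}$ score. This part is routine once the identification with $C$ is made explicit.

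The main obstacle is $\mathrm{FE}$, which is genuinely different in form: its correlation is taken across $N$ distinct inputs $\{x^{(i)}\}_{i=1}^N$, the $i$-th coordinate of the two vectors being $\sum_{j\in\mathcal{I}_i}S_f(x^{(i)})_j$ and $\Delta[f(x^{(i)}),f(x^{(i)}\setminus\mathcal{I}_i)]$, whereas \eqref{optSaliency} fixes a single $x$. To approach it I would first apply the hypothesis on each $x^{(i)}$ to the family of singletons $\{\{j\}\}_{j=1}^n$, which forces $S_f^*(x^{(i)})$ to be monotonically (resp.\ affinely, for Spearman vs.\ Pearson) aligned with the marginal effects $\big(\Delta[f(x^{(i)}),f(x^{(i)}\setminus\{j\})]\big)_j$, and more generally to align the within-sample local sums with the single-input perturbation effects. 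The delicate step is then to transfer these per-input alignments to the cross-input vector that $\mathrm{FE}$ correlates, since a correlation coefficient couples the relative scales of coordinates coming from different inputs, and per-sample optimality alone does not a priori prevent a different assignment from buying a higher cross-sample correlation. I expect to close this gap using the scale-invariance of $\tau$ under a common positive affine/monotone rescaling together with a regularity assumption linking single-input and cross-input perturbation effects; this is the step I anticipate being hardest to make fully rigorous.

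Finally I would assemble the four cases: each metric having been shown to be maximized by $S_f^*$, the proposition follows. I would keep the write-up modular, separating the clean reductions ($\mathrm{FC}$, $\mathrm{INF}$, $\mathrm{MC}$) from the $\mathrm{FE}$ argument, so that it is transparent exactly where the cross-sample subtlety enters and what additional structure on $\Delta$ it requires.
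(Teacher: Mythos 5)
Your reduction of $\mathrm{FC}$, $\mathrm{MC}$, and $\mathrm{INF}$ to direct instantiations of Eq.~(\ref{optSaliency}) is correct and is the intended argument: each of these metrics is the unified correlation functional evaluated at a \emph{single} input with a particular family of index sets (all subsets for $\mathrm{FC}$, a fixed sequence for $\mathrm{MC}$, sampled sets for $\mathrm{INF}$), so the universally quantified hypothesis applies verbatim, and for $\mathrm{INF}$ optimality for every realized family yields optimality of the score.

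The $\mathrm{FE}$ case, however, contains a genuine gap, and the route you sketch cannot close it: under the literal per-sample reading of Eq.~(\ref{optSaliency}) that you adopt, the $\mathrm{FE}$ claim is not merely hard to prove --- it is false. The hypothesis constrains $S_f^*$ only through within-sample correlations, and every standard choice of $\tau$ (Pearson, Spearman) is invariant under a positive rescaling of its first argument; hence if $S_f^*$ satisfies the hypothesis, so does the mapping $x \mapsto c(x)\,S_f^*(x)$ for arbitrary sample-dependent constants $c(x)\in(0,1]$. But $\mathrm{FE}$ correlates local sums \emph{across} samples, and such a rescaling changes its value: even if the local sums of $S_f^*$ equal the perturbation effects exactly, choosing $c(x^{(i)})$ so that $c(x^{(i)})\sum_{j\in\mathcal{I}_i}S_f^*(x^{(i)})_j$ is affinely decreasing in $\Delta[f(x^{(i)}),f(x^{(i)}\setminus\mathcal{I}_i)]$ drives the cross-sample correlation to $-1$ while preserving per-sample optimality. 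So two mappings indistinguishable to the hypothesis can have opposite $\mathrm{FE}$ scores, and no appeal to ``scale-invariance of $\tau$'' can help --- that invariance is precisely what creates the damaging freedom; your singleton-family step likewise constrains only within-sample structure. The only way the $\mathrm{FE}$ case goes through --- and the reading the unified framework intends, since Table~\ref{faithTable} writes $\mathrm{FE}$ in exactly the unified correlation form with each $\mathcal{I}_i$ paired to its own input $x^{(i)}$ --- is to interpret the maximality condition as holding jointly over arbitrary collections of (input, index-set) pairs, i.e., with $S_f^*(x^{(i)})$ rather than a single $S_f^*(x)$ inside the correlation. Under that cross-sample reading, $\mathrm{FE}$ follows by the same one-line instantiation as the other three metrics; under your reading it does not follow at all, so you should either adopt the joint reading explicitly or add it as a hypothesis rather than hope a ``regularity assumption'' can bridge the two.
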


Although RP, IROF, DEL, INS, NEG, and POS evaluate permutation explanations in ways that differ substantially from FC, FE, INF, and MC, we theoretically show that they share an underlying consistency, and prove that $S_f^*$ in Proposition \ref{saProp} can induce an optimal permutation explanation mapping on all six permutation-based faithfulness metrics.

\begin{figure*}[htb]
\centering
\includegraphics[width=1\textwidth]{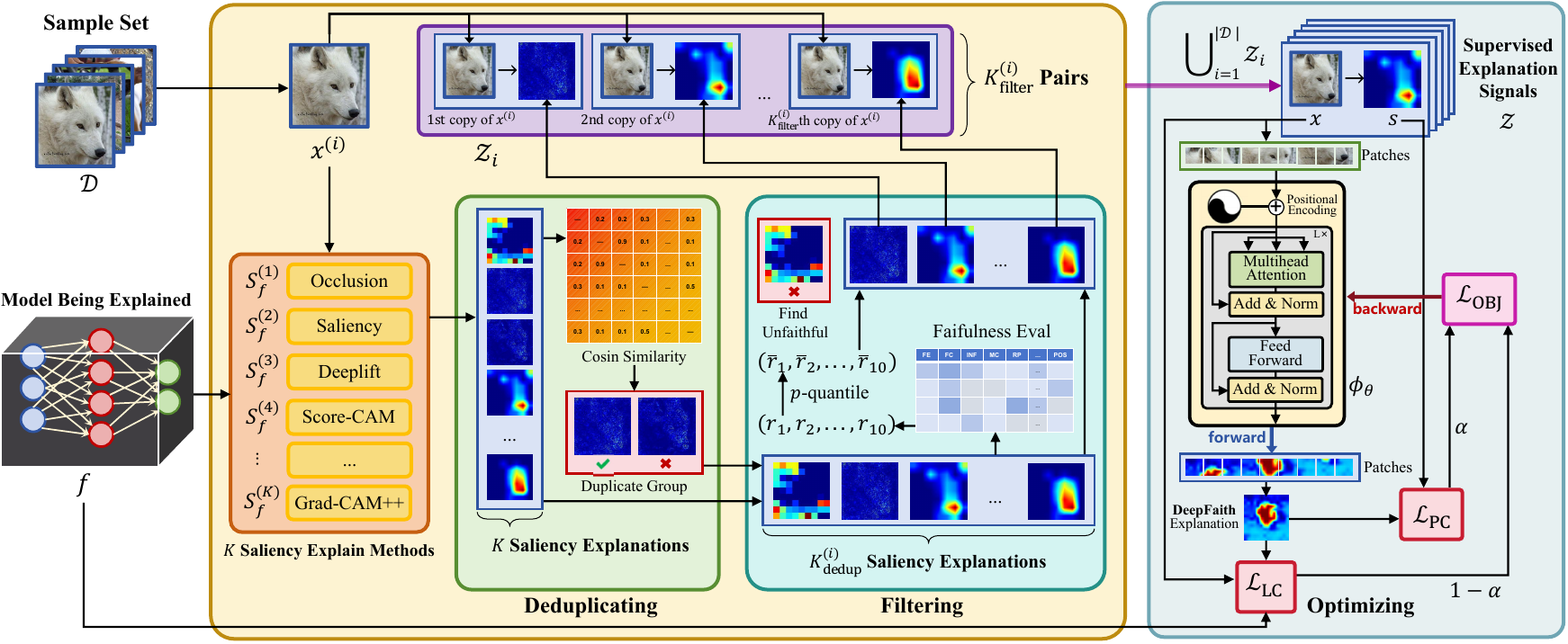}
\caption{\textbf{DeepFaith} learning framework. We meticulously design a high-quality supervised explanation signal generation workflow that leverages $K$ existing explanation methods with deduplicating and filtering. We further introduce a training pipeline for a deep neural explainer (an $L$-layer Transformer encoder in the figure) that optimizes $\mathcal{L}_{\mathrm{LC}}$ (Eq. \ref{faithObj}) theoretically grounded by Theorem \ref{puTheo} and $\mathcal{L}_{\mathrm{PC}}$ (Eq. \ref{simObj}) empirically guided by the supervised signals. Image modality is shown as an example.}
\label{deepfaith}
\end{figure*}

\begin{theorem}
\label{puTheo}
\textit{Under the conditions of Proposition \ref{saProp}, given a fixed preservation effect $\Delta^-$ that is negatively correlated with $\Delta$, let $\Pi_f^*(\cdot)=\mathfrak{P}[S_f^*(\cdot)]$ denote the permutation explanation mapping induced by $S_f^*$, then for any sample $x$, $\Pi_f^*(x)$ always achieve optimal faithfulness under the $\mathrm{DEL}$, $\mathrm{INS}$, $\mathrm{NEG}$, $\mathrm{POS}$, $\mathrm{RP}$ and $\mathrm{IROF}$ evaluation metrics.}
\end{theorem}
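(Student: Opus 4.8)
The plan is to reduce all six permutation metrics to a single \emph{prefix-optimality} property of the sorted permutation $\Pi_f^*(x)=\mathfrak{P}[S_f^*(x)]$, and then verify that each metric is extremized by that property. The starting point is that Proposition~\ref{saProp} assumes the correlation in \eqref{optSaliency} is maximized by $S_f^*$ simultaneously for \emph{every} collection $\{\mathcal{I}_i\}_{i=1}^N$. First I would extract from this a monotonicity statement: for any two index sets $\mathcal{I},\mathcal{J}\subseteq[n]$, if $\sum_{j\in\mathcal{I}}S_f^*(x)_j\ge\sum_{j\in\mathcal{J}}S_f^*(x)_j$ then $\Delta[f(x),f(x\setminus\mathcal{I})]\ge\Delta[f(x),f(x\setminus\mathcal{J})]$. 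Were this to fail for some pair, one could place $\mathcal{I}$ and $\mathcal{J}$ into a collection on which correcting the inversion would strictly raise $\tau$, contradicting that $S_f^*$ is the argmax for every collection. Thus the perturbation effect is a nondecreasing function of the saliency mass of the removed set.

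Next I would exploit the elementary fact that the descending sort maximizes every prefix sum at once. Writing $\pi^*=\Pi_f^*(x)$, the size-$i$ prefix collects the $i$ largest saliency values, so $\sum_{k\le i}S_f^*(x)_{\pi^*(k)}\ge\sum_{k\le i}S_f^*(x)_{\pi(k)}$ for every $i$ and every competing permutation $\pi$. Combining this with the monotonicity above yields the key pointwise inequality $\Delta[f(x),f(x\setminus\{\pi^*(1),\dots,\pi^*(i)\})]\ge\Delta[f(x),f(x\setminus\{\pi(1),\dots,\pi(i)\})]$ at every prefix length $i$. Since $\Delta^-$ is negatively correlated with $\Delta$, the sorted prefix minimizes $\Delta^-$ pointwise; and by the complementation identity $x^{\circ}\cup\mathcal{I}=x\setminus([n]\setminus\mathcal{I})$, inserting the sorted prefix equals deleting the sorted \emph{suffix}, which carries minimal saliency mass and hence maximizes preservation.

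With prefix-optimality in hand, each metric in Table~\ref{faithTable} follows by summing or integrating a pointwise-extremal integrand; write $P_i=\{\pi^*(1),\dots,\pi^*(i)\}$. For RP (descending deletion, perturbation-based), every term $\Delta[f(x),f(x\setminus P_i)]$ is maximal under $\pi^*$, so the average is maximal. For DEL and POS (descending deletion, preservation-based), the same prefixes minimize $\Delta^-$ pointwise, so the preservation AUC is minimal and, for POS, the change-point $t$ is minimal. For IROF (descending deletion, AOC of $\Delta^-$), the minimal pointwise $\Delta^-$ makes the area-over-curve maximal. For INS, the identity $x^{\circ}\cup P_i=x\setminus([n]\setminus P_i)$ turns prefix insertion into suffix deletion, so $\pi^*$ maximizes preservation at every step and hence the insertion AUC. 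For NEG, the reversed permutation $\overset{\hookleftarrow}{\pi^*}$ deletes the smallest-saliency features first, maximizing preservation and the change-point $t$ at every step. In each case $\pi^*$ attains the extremal metric value, that is, optimal faithfulness.

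The main obstacle I anticipate is the first step---turning ``argmax of $\tau$ for all collections'' into clean pairwise monotonicity---since the precise argument depends on the chosen $\tau$: for Spearman it is immediate, as maximal rank correlation forces the saliency-sum ordering to coincide with the perturbation-effect ordering, whereas for Pearson one must argue that any inversion between saliency mass and perturbation effect can be isolated and removed to strictly increase the coefficient. I would therefore state the monotonicity as a short lemma and prove it once for rank-based $\tau$, noting that the Pearson case reduces to the same ordering conclusion. A secondary subtlety is that the permutation metrics couple their prefixes through nesting; this is handled cleanly because $\pi^*$ maximizes all prefix sums simultaneously, so no joint optimization over the nested family of removed sets is required.
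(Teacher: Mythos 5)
Your proposal is correct and follows essentially the same route as the paper's proof: extracting pairwise monotonicity (saliency mass of a removed set ordering the perturbation effects) by contradiction from the argmax property of Eq.~(\ref{optSaliency}), combining it with the fact that the descending sort maximizes all prefix sums simultaneously, and then reading off each of the six metrics pointwise, including the same complementation identity that converts insertion into deletion under the reversed permutation for INS. Your added lemma-level care about how the contradiction step depends on the choice of $\tau$ (rank-based vs.\ Pearson) is a sharpening the paper leaves implicit, but it does not change the argument's structure.
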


\begin{proof}
$\forall \Pi_f$, given an input sample $x\in\mathcal{X}$, let $\pi = \Pi_f(x)$ and $\pi^*=\Pi_f^*(x)$ denote the permutation explanations generated by different mappings, and $s^*=S_f^*(x)$. In addition, we denote $\Delta_{f,x}(\mathcal{I})=\Delta[f(x),f(x\setminus\mathcal{I})]$ for simplicity.

Given any $\mathcal{I}_a,\mathcal{I}_b$ satisfying $\sum_{j\in \mathcal{I}_a}s^*_j\ge \sum_{j\in \mathcal{I}_b}s^*_j$, suppose that $\Delta_{f,x}(\mathcal{I}_a) < \Delta_{f,x}(\mathcal{I}_b)$. Then there must exist $s$ satisfying $\sum_{j\in\mathcal{I}_a}s_j<\sum_{j\in\mathcal{I}_b}s_j$ such that
\begin{equation*}
\tau\!\! \left [ \!\!
\binom{\sum_{j\in \mathcal{I}_a}s_j}{\sum_{j\in \mathcal{I}_b}s_j}\!,\!\!
\binom{\!\Delta_{f,x}(\mathcal{I}_a)}{\!\Delta_{f,x}(\mathcal{I}_b)}\!\!
\right ]\!\!\!>\!
\tau \!\!\left [ \!\!
\binom{\sum_{j\in \mathcal{I}_a}s^*_j}{\sum_{j\in \mathcal{I}_b}s^*_j}\!,\!
\binom{\!\Delta_{f,x}(\mathcal{I}_a)}{\!\Delta_{f,x}(\mathcal{I}_b)}\!\!
\right ]\!,
\end{equation*} which contradicts the definition of $S_f^*$ given in Eq. (\ref{optSaliency}). Therefore, we can conclude that
\begin{equation*}
\forall\ \mathcal{I}_a,\mathcal{I}_b,\sum_{j\in \mathcal{I}_a}s^*_j\ge \sum_{j\in \mathcal{I}_b}s^*_j\Rightarrow \Delta_{f,x}(\mathcal{I}_a)\ge\Delta_{f,x}(\mathcal{I}_b).
\end{equation*}
Considering index sets $\bigcup_{j=1}^i\pi^*(j)$ and $\bigcup_{j=1}^i\pi(j)$, since the permutation explanation $\pi^*=\mathfrak{P}(s^*)$ implies that $\forall i\le n$, $\sum_{j=1}^i s^*_{\pi^*(j)}\ge \sum_{j=1}^i s^*_{\pi(j)}$, thus we have
\begin{equation*}
\Delta_{f,x}\left ({\textstyle \bigcup_{j=1}^i\pi^*(j)}\right )\ge \Delta_{f,x}\left ({\textstyle \bigcup_{j=1}^i\pi(j)}\right ).
\end{equation*} By aggregating this result over samples $\{x^{(i)}\}_{i=1}^N$, we can get $\mathrm{RP}(\Pi_f^*;\{x^{(i)}\}_{i=1}^N,f)\ge \mathrm{RP}(\Pi_f;\{x^{(i)}\}_{i=1}^N,f)$. Since $\Delta^-$ is negatively correlated with $\Delta$, i.e., 
\begin{equation*}
\forall\ \mathcal{I}_a,\mathcal{I}_b,\sum_{j\in \mathcal{I}_a}s^*_j\ge \sum_{j\in \mathcal{I}_b}s^*_j\Rightarrow \Delta^-_{f,x}(\mathcal{I}_a)\le\Delta^-_{f,x}(\mathcal{I}_b),
\end{equation*} it is obvious that $\mathrm{DEL}(\pi^*;x,f)\le \mathrm{DEL}(\pi;x,f)$ and $\mathrm{POS}(\pi^*;x,f)\le \mathrm{POS}(\pi;x,f)$; by the same way, one can derive $\mathrm{NEG}(\pi^*;x,f)\ge \mathrm{NEG}(\pi;x,f)$ and $\mathrm{IROF}(\Pi_f^*;\{x^{(i)}\}_{i=1}^N,f)\ge \mathrm{IROF}(\Pi_f;\{x^{(i)}\}_{i=1}^N,f)$.

Given a baseline input $x^{\circ}$ representing the uninformative state, we have $x^{\circ}\cup {\bigcup_{j=1}^{i}\pi(j)}=x\setminus\bigcup_{j=1}^i\overset{\hookleftarrow }{\pi}$, thus $\mathrm{INS}(\pi^*;x,f)\ge \mathrm{INS}(\pi;x,f)$.
\end{proof}

% Through Proposition \ref{saProp} and Theorem \ref{puTheo}, we find that the saliency explanation mapping $S_f^*$ achieves optimal faithfulness under both types of explanation evaluation. Therefore, Eq. (\ref{optSaliency}) can be regarded as the consistent objective function of the \textit{most faithful explanation mapping}.
% Through Proposition \ref{saProp} and Theorem \ref{puTheo}, we show that $S_f^*$ achieves optimal faithfulness under both evaluation types. Thus, Eq. (\ref{optSaliency}) defines a unified objective for the \textit{explanation mapping with optimal faithfulness}.

\begin{figure}[ht]
\centering
\subfloat[Function Space]
  {
      \label{funcSpace}\includegraphics[width=0.5\columnwidth]{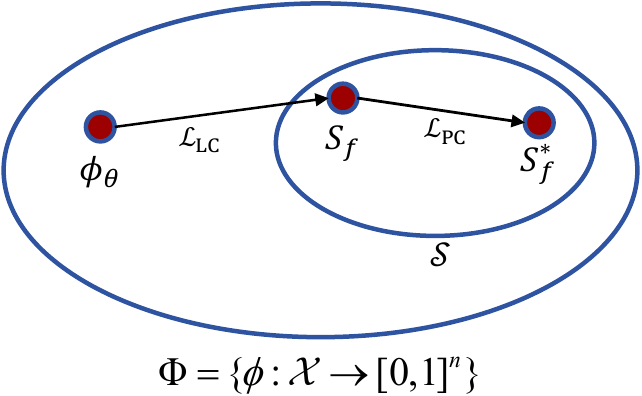}
  }
  \subfloat[Loss and $\alpha$]
  {
      \label{loss}\includegraphics[width=0.5\columnwidth]{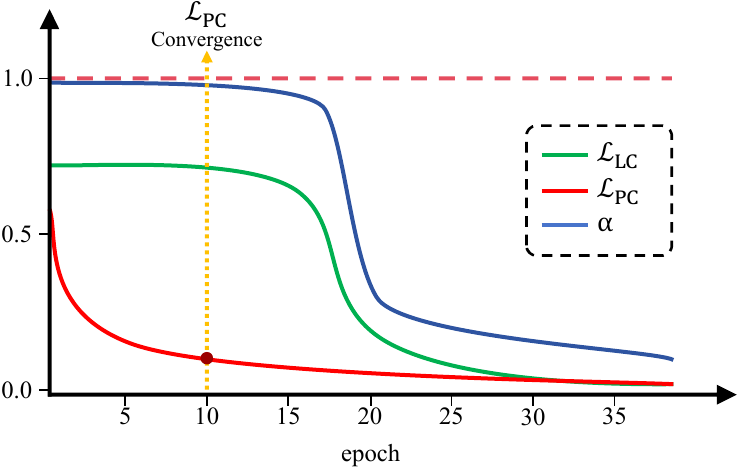}
  }
\caption{The function spaces $\Phi$ and $\mathcal{S}$ along with the dominant loss during the optimization of $\phi_\theta$ (a), and evolution of the weight $\alpha$ and the two loss terms during training (b).}
\label{layerResult}
\end{figure}

Let $\Phi = \{\phi: \mathcal{X} \to [0,1]^n\}$ denote the space of mappings from model inputs to $n$-dimensional vectors bounded in $[0,1]$. The family of saliency explanation mappings $\mathcal{S} = \{S_f: \mathcal{X} \to [0,1]^n\}$ (e.g., $\{S_f: \forall x, \mathrm{FC}(S_f(x); x, f) \ge 0.5\}$) forms a subset of $\Phi$, as illustrated in Figure \ref{funcSpace}.

Since Eq. (\ref{optSaliency}) is analytically intractable, \textbf{DeepFaith} trains a deep neural network $\phi_\theta \in \Phi$ (a transformer encoder in our experiments), parameterized by $\theta$, to approximate $S_f^* \in \mathcal{S} \subset \Phi$. Given a sample set $\mathcal{D}=\{x^{(i)}\}_{i=1}^{|\mathcal{D}|}$, faithfulness can be optimized using the \underline{L}ocal \underline{C}orrelation loss $\mathcal{L}_{\mathrm{LC}}$:
\begin{align}
\label{faithObj}
&\mathcal{L}_{\mathrm{LC}}(\phi_\theta;\mathcal{D},f)\\
\nonumber
=&-\frac{1}{|\mathcal{D}|}\sum_{x\in \mathcal{D}}\tau \left [ \left ( {\textstyle \sum_{i\in\mathcal{I}}\phi_\theta(x)_i} \right )_{\mathcal{I}\subseteq[n]},\left( \Delta_{x,f}(\mathcal{I})\right)_{\mathcal{I}\subseteq[n]} \right ] ,
\end{align}
where $\Delta$ and $\tau$ are user-defined. Notably, the trained explainer no longer requires access to $f$ during inference, as its decision rationale is already embedded through optimizing $\mathcal{L}_{\mathrm{LC}}$.

\begin{figure*}[ht]
\centering
  \subfloat[Image]
  {
      \label{cv}\includegraphics[width=0.42\textwidth]{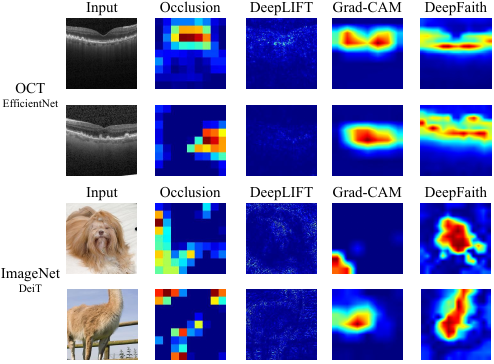}
  }
  \subfloat[Text]
  {
      \label{nlp}\includegraphics[width=0.37\textwidth]{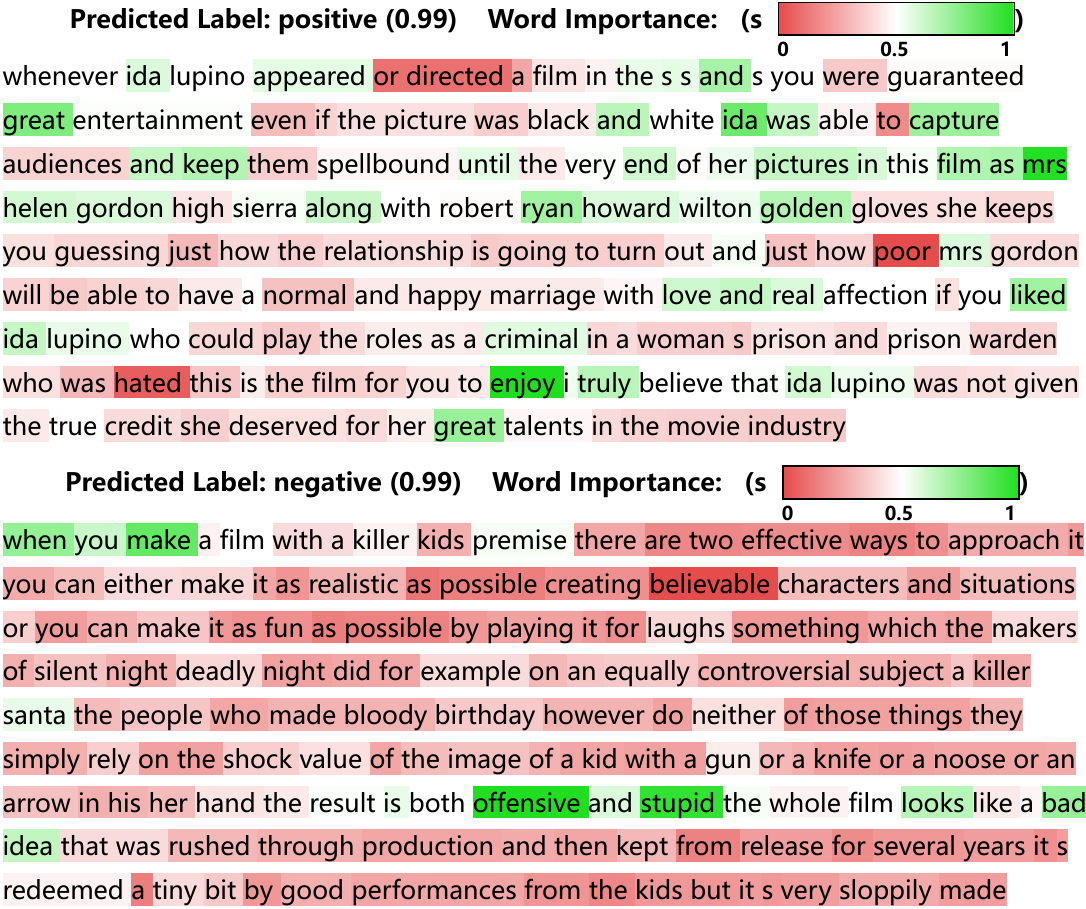}
  }
  \subfloat[Tabular]
  {
      \label{tab}\includegraphics[width=0.15\textwidth]{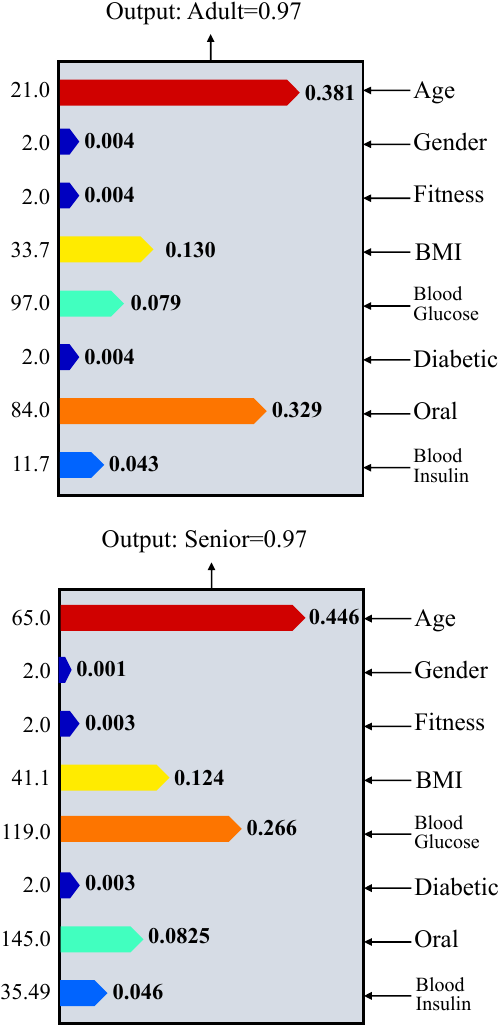}
  }
\caption{Explanations generated by the \textbf{DeepFaith} explainer trained on tasks across modalities, including comparisons with three other methods on image datasets (a), sentiment attributions for IMDb reviews (b), and contribution analysis of feature dimensions in NAP health data (c). See Appendix H for more visualizations.}
\label{visualization}
\end{figure*}

\section{Learning Framework of Faithful Explainer}
In this section, we propose, for the first time,  high-quality supervised explanation signals generation within our explainer learning framework. Explanations from different methods, although including domain-specific techniques and general-purpose algorithms, inherently reflect the functional dependency between input features and model predictions. Given that such patterns generalize across similar instances, an explainer can be trained to approximate the underlying mapping from inputs to saliency explanations.

We first generate a set of \textit{input–saliency explanation pairs} as supervised explanation signals illustrated in Figure \ref{deepfaith}. Given a sample set $\mathcal{D}$ and $K$ saliency explanation methods $\{S_f^{(i)}\}_{i=1}^K$ (e.g., Occlusion \cite{MatthewD.Zeiler1311}, Saliency \cite{simonyan2014saliency}, DeepLIFT \cite{Shrikumar1704}, Score-CAM \cite{wang2020score} and Grad-CAM++ \cite{Chattopadhyay1710}), we generate $K$ saliency explanation $\{S^{(j)}_f(x^{(i)})\}_{j=1}^K$ for each sample $x^{(i)}$. These explanations are then processed via deduplicating and filtering:
\begin{itemize}
    \item Deduplicating: We compute the pairwise cosine similarity between $K$ saliency explanations of a given sample $x^{(i)}$ and identify duplicate groups based on a manually defined similarity threshold. The first explanation in each group is retained, while the others are removed. After deduplicating, the number of distinct saliency explanations is denoted as $K_{\mathrm{dedup}}^{(i)} \le K$. This step aims to prevent highly similar explanations from introducing bias into the training of the explainer.
    \item Filtering: For each of the $K_{\mathrm{dedup}}^{(i)}$ retained explanations, we use all ten faithfulness metrics (the faithfulness of a saliency explanation can be evaluated from permutation perspective via $\mathfrak{P}$) to get their evaluation scores $(r_1,r_2,...,r_{10})$. We determine a filtering threshold $(\bar{r}_1,\bar{r}_2,...,\bar{r}_{10})$ by computing the $p$-quantile (or the $(1-p)$-quantile for metrics where lower is better) of all $K_{\mathrm{dedup}}^{(i)}$ scores under each metric. Finally, we retain $K_{\mathrm{filter}}^{(i)}\le K_{\mathrm{dedup}}^{(i)}$ explanations satisfying $\forall j\le 10,r_j\ge \bar{r}_j$ (or $r_j\le \bar{r}_j$ for metrics where lower is better).
\end{itemize}
After our explanation processing steps, the remained ones can be regarded as high-quality supervised explanation signals. For each input $x^{(i)}$, we replicate it $K_{\mathrm{filter}}^{(i)}$ times and pair each copy with its corresponding saliency explanation to construct the input–saliency explanation pair set
\begin{equation*}
\mathcal{Z}=\left \{ \left ( x^{(i)}, S_f^{(j)}(x^{(i)})\right )\mid i\le|\mathcal{D}|,j\in \left [ K_{\mathrm{filter}}^{(i)}\right ] \right \}.
\end{equation*}

\textbf{DeepFaith} optimizes the proximity between the explanations generated by $\phi_\theta$ and the high-quality saliency explanation through the \underline{P}attern \underline{C}onsistency loss $\mathcal{L}_{\mathrm{PC}}$:
\begin{equation}
\label{simObj}
\mathcal{L}_{\mathrm{PC}}(\phi_\theta;\mathcal{Z})=\frac{1}{|\mathcal{Z}|}\sum_{(x,s)\in\mathcal{Z}} \left ( 1-\tau \left [ \phi_\theta(x),s \right ] \right ),
\end{equation}
where $\tau$ can be any similarity measure and is not necessarily the same as the one used in Eq. (\ref{faithObj}). 

To jointly leverage and control both losses during training the explainer, \textbf{DeepFaith} introduces a weighting parameter $\alpha\in[0,1]$, forming the overall optimization \underline{OBJ}ective:
\begin{align}
\label{lossFinal}
&\mathcal{L}_{\mathrm{OBJ}}(\phi_\theta;\mathcal{D},f,\mathcal{Z})\\
\nonumber
=&\alpha \mathcal{L}_{\mathrm{PC}}(\phi_\theta;\mathcal{Z})+(1-\alpha)\mathcal{L}_{\mathrm{LC}}(\phi_\theta;\mathcal{D},f).
\end{align}
As shown in Figure \ref{loss}, at the early stage of training, we set $\alpha$ close to 1 (primarily optimizing $\mathcal{L}_{\mathrm{PC}}$), and gradually decrease it toward 0 after $\mathcal{L}_{\mathrm{PC}}$ loss convergence, shifting the focus to $\mathcal{L}_{\mathrm{LC}}$.

\begin{table*}[htbp]
  \centering
{\fontsize{9pt}{10pt}\selectfont
\setlength{\tabcolsep}{2pt}
  \begin{tabular}{l|ccc|ccc|cc|cc|c|c}
    \toprule
    \multirow{2}[2]{*}{{Method}} & \multicolumn{3}{c|}{OCT} & \multicolumn{3}{c|}{ImageNet} & \multicolumn{2}{c|}{IMDb} & \multicolumn{2}{c|}{AGNews} & NAP & WCD \\
    \cmidrule(lr){2-4} \cmidrule(lr){5-7} \cmidrule(lr){8-9} \cmidrule(lr){10-11} \cmidrule(lr){12-12} \cmidrule(lr){13-13}
     & DeiT & EfficientNet & ResNet & DeiT & EfficientNet & ResNet & LSTM & Transformer & LSTM & Transformer & MLP & MLP \\
    \midrule
   DeepFaith (ours) & 
    \textcolor{red}{\textbf{\ \ 3.4}} & 
    \textcolor{red}{\textbf{\ \ 2.9}} & 
    \textcolor{red}{\textbf{\ \ 4.1}} & 
    \textcolor{red}{\textbf{\ \ 4.4}} & 
    \textcolor{red}{\textbf{\ \ 4.4}} & 
    \textcolor{red}{\textbf{\ \ 3.3}} & 
    \textcolor{red}{\textbf{\ \ 2.3}} & 
    \textcolor{red}{\textbf{\ \ 2.1}} & 
    \textcolor{red}{\textbf{\ \ 2.9}} & 
    \textcolor{red}{\textbf{\ \ 2.7}} & 
    \textcolor{red}{\textbf{\ \ 1.8}} & 
    \textcolor{red}{\textbf{\ \ 1.8}} \\
    Integrated Grads & \ \ 7.8 & \ \ 7.6 & \ \ 4.8 & \ \ 6.4 & \ \ 7.0 & \ \ 5.4 & \ \ 3.3 & \ \ 5.6 & \ \ 4.9 & \ \ 5.9 & \ \ 2.8 & \ \ 5.2 \\
    Gradient SHAP & \textcolor{gray}{\ \ N/A} & \textcolor{gray}{\ \ N/A} & \textcolor{gray}{\ \ N/A} & \textcolor{gray}{\ \ N/A} & \textcolor{gray}{\ \ N/A} & \textcolor{gray}{\ \ N/A} & \ \ 4.4 & \ \ 4.0 & \ \ \textcolor{red}{\textbf{2.9}} & \ \ 4.2 & \ \ 4.7 & \ \ 7.3 \\
    DeepLIFT & \ \ 5.8 & \ \ 7.8 & \ \ 8.1 & \ \ 7.0 & \ \ 6.9 & \ \ 8.4 & \ \ 6.1 & \ \ 6.4 & \ \ 7.9 & \ \ 5.9 & \ \ 4.4 & \ \ 2.3 \\
    Saliency &  13.2 &  11.0 & 12.8 & 10.7 &  11.1 & 10.6 & \ \ 5.2 & \ \ 5.9 & \ \ 4.7 & \ \ 5.8 & \ \ 2.8 & \ \ 4.9 \\
    Occlusion & \ \ 8.5 & \ \ 6.5 & \ \ 8.4 & \ \ 8.9 & \ \ 9.6 & 10.9 & \ \ 4.6 & \ \ 3.6 & \ \ \textcolor{red}{\textbf{2.9}} & \ \ \textcolor{red}{\textbf{2.7}} & \ \ 3.3 & \ \ 5.9 \\
    Feature Ablation & \textcolor{gray}{\ \ N/A} & \textcolor{gray}{\ \ N/A} & \textcolor{gray}{\ \ N/A} & \textcolor{gray}{\ \ N/A} & \textcolor{gray}{\ \ N/A} & \textcolor{gray}{\ \ N/A} & \ \ 6.4 & \ \ 5.1 & \ \ 6.6 & \ \ 8.5 & \ \ 3.5 & \ \ 4.5 \\
    LIME &  12.3 & \ \ 8.1 & \ \ 9.9 &  10.7 & \ \ 6.6 & \ \ 8.5 & \ \ 7.7 & \ \ 6.8 & \ \ 4.6 & \ \ 4.5 & \ \ 4.7 & \ \ 2.7 \\
    Kernel SHAP & \ \ 4.2 & 10.9 &  12.1 & \ \ 7.0 & \ \ 5.9 & \ \ 8.9 & \ \ 5.0 & \ \ 5.5 & \ \ 6.4 & \ \ 3.9 & \ \ 3.9 & \ \ 8.9 \\
    Input × Gradient & \ \ 5.7 &  12.3 &  12.2 & \ \ 5.3 &  12.9 &  10.7 & \textcolor{gray}{\ \ N/A} & \textcolor{gray}{\ \ N/A} & \textcolor{gray}{\ \ N/A} & \textcolor{gray}{\ \ N/A} & \textcolor{gray}{\ \ N/A} & \textcolor{gray}{\ \ N/A} \\
    Guided Backprop &  12.3 & \ \ 6.5 & \ \ 7.6 &  11.4 &  10.3 &  10.4 & \textcolor{gray}{\ \ N/A} & \textcolor{gray}{\ \ N/A} & \textcolor{gray}{\ \ N/A} & \textcolor{gray}{\ \ N/A} & \textcolor{gray}{\ \ N/A} & \textcolor{gray}{\ \ N/A} \\
    Grad-CAM & \ \ 8.6 & \ \ 8.2 & \ \ 7.6 &  11.9 & \ \ 6.6 & \ \ 7.0 & \textcolor{gray}{\ \ N/A} & \textcolor{gray}{\ \ N/A} & \textcolor{gray}{\ \ N/A} & \textcolor{gray}{\ \ N/A} & \textcolor{gray}{\ \ N/A} & \textcolor{gray}{\ \ N/A} \\
    Score-CAM & \ \ 7.0 & \ \ 7.9 & \ \ 6.0 & \ \ 5.8 & \ \ 7.2 & \ \ 7.1 & \textcolor{gray}{\ \ N/A} & \textcolor{gray}{\ \ N/A} & \textcolor{gray}{\ \ N/A} & \textcolor{gray}{\ \ N/A} & \textcolor{gray}{\ \ N/A} & \textcolor{gray}{\ \ N/A} \\
    Grad-CAM++ & \ \ 5.0 &  10.0 & \ \ 7.4 & \ \ 4.9 & \ \ 7.8 & \ \ 8.3 & \textcolor{gray}{\ \ N/A} & \textcolor{gray}{\ \ N/A} & \textcolor{gray}{\ \ N/A} & \textcolor{gray}{\ \ N/A} & \textcolor{gray}{\ \ N/A} & \textcolor{gray}{\ \ N/A} \\
    Expected Grads & \ \ 6.9 & \ \ 8.0 & \ \ 7.1 & \ \ 7.1 & \ \ 9.5 & \ \ 6.4 & \textcolor{gray}{\ \ N/A} & \textcolor{gray}{\ \ N/A} & \textcolor{gray}{\ \ N/A} & \textcolor{gray}{\ \ N/A} & \textcolor{gray}{\ \ N/A} & \textcolor{gray}{\ \ N/A} \\
    DeepLIFT SHAP & \ \ 6.9 & \ \ 7.3 & \ \ 5.6 & \ \ 7.5 & \ \ 8.7 & \ \ 9.3 & \textcolor{gray}{\ \ N/A} & \textcolor{gray}{\ \ N/A} & \textcolor{gray}{\ \ N/A} & \textcolor{gray}{\ \ N/A} & \textcolor{gray}{\ \ N/A} & \textcolor{gray}{\ \ N/A} \\
    LRP &  12.0 & \ \ 4.5 & \ \ 5.4 &  10.2 & \ \ 5.0 & \ \ 4.5 & \textcolor{gray}{\ \ N/A} & \textcolor{gray}{\ \ N/A} & \textcolor{gray}{\ \ N/A} & \textcolor{gray}{\ \ N/A} & \textcolor{gray}{\ \ N/A} & \textcolor{gray}{\ \ N/A} \\
    \bottomrule
  \end{tabular}
  }
\caption{Comparison of average faithfulness between \textbf{DeepFaith} and other baseline methods across 12 explanation tasks. We report the average rank of each method under 10 faithfulness evaluation metrics, where \textcolor{red}{\textbf{Red}} denotes the optimal.}
\label{faithfulrank}
\end{table*}

This design aims to ensure that, in the early stages of training, the loss is dominated by $\mathcal{L}_{\mathrm{PC}}$, guiding the explainer to converge within the function space $\mathcal{S}$ shown in Figure \ref{funcSpace}, thereby acquiring basic explanatory capability. As training progresses, the benefit of optimizing $\mathcal{L}_{\mathrm{PC}}$ becomes limited by the signals. Therefore, we gradually decrease $\alpha$ to let $\mathcal{L}_{\mathrm{LC}}$ dominate the optimization, enabling the explainer to approximate $S_f^*$.

\section{Experiments}
In this section, we report the observations during the generation of the supervised explanation signals, as well as the performance and runtime of \textbf{DeepFaith} across various explanation tasks. We also provide ablation experiments to verify the necessity of combining $\mathcal{L}_{\rm PC}$ and $\mathcal{L}_{\rm LC}$.

\textbf{Experimental Setting:} To validate its \textit{domain-free} and \textit{model-agnostic} capabilities, \textbf{DeepFaith} is tested on image, text, and tabular modalities using various model architectures. These dataset-model combinations yield diverse settings with varying complexity, forming a \textit{comprehensive and challenging} benchmark for explanation quality. Dataset details are in Appendix B. All experiments were conducted on Ubuntu 22.04 with eight NVIDIA RTX A6000 GPUs.
\begin{itemize}
    \item \textbf{Image modality:} Following Latec \cite{Klein2024}, we use ImageNet \cite{Deng1000} and UCSD OCT Retina (OCT) \cite{Kermany2018}, explaining ResNet50 \cite{He1512}, EfficientNetb0 \cite{Tan1905}, and DeiT \cite{Touvron}.
    \item \textbf{Text modality:} IMDb Movie Review (IMDb) \cite{Maas2011} and AGNews \cite{Zhang2016} are used with LSTM and vanilla Transformer \cite{Vaswani2023}.
    \item \textbf{Tabular modality:} We use NHANES Age Prediction (NAP) \cite{nhanes_age_prediction_2019} and Wholesale Customers Data (WCD) \cite{wholesale_customers_2013} from UCI, with MLP-based predictors.
\end{itemize}
\begin{figure}[ht]
\centering
\includegraphics[width=1.0\columnwidth]{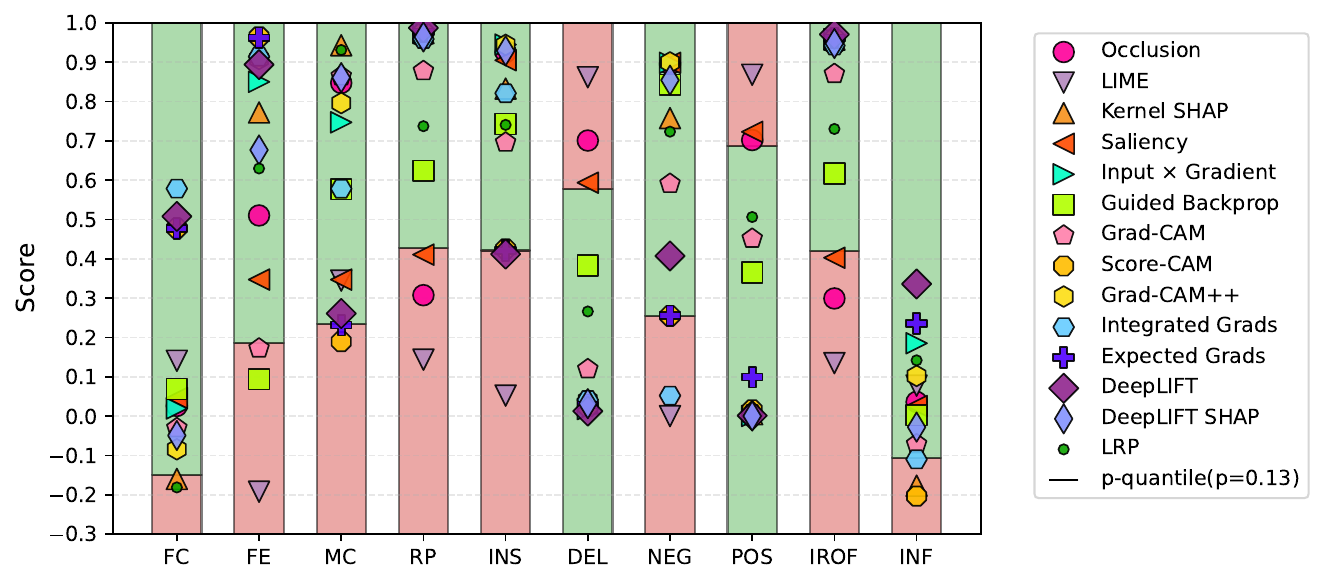}
\caption{In the OCT+DeiT explanation task, we compute 10 faithfulness metrics for each explanation method on a single sample and apply the $p$-quantile threshold to filter out low-quality explanations. Red and green regions denote the filtered-out range and retained range, respectively.}
\label{faithfulness}
\end{figure}

For the image modality, we generate supervised signals from and compare against the following baseline methods: Occlusion, LIME \cite{Ribeiro2016}, Kernel SHAP and DeepLIFT SHAP \cite{Lundberg1705}, Saliency, Input × Gradient \cite{Shrikumar1704}, Guided Backprop \cite{springenberg2015allconv}, Grad-CAM \cite{Selvaraju1610}, Score-CAM, Grad-CAM++, Integrated Grads \cite{Sundararajan1703}, Expected Grads \cite{erion2020}, DeepLIFT, and LRP \cite{binder2016lrp}. For the text and tabular modalities, we adopt Integrated Grads, Gradient SHAP \cite{Lundberg1705}, DeepLIFT, Saliency, Occlusion, Feature Ablation \cite{kokhlikyan2020captum}, LIME, and Kernel SHAP. Parameter settings are listed in Appendix C.

\begin{table*}[tbp]
  \centering
{\fontsize{9pt}{10pt}\selectfont
\setlength{\tabcolsep}{4pt}
    \begin{tabular}{l|c|rrrrcccccc}
    \toprule
    \multicolumn{1}{l}{Setting} & \multicolumn{1}{c}{Ablation} & \multicolumn{1}{r}{FC $\uparrow$} & \multicolumn{1}{r}{FE $\uparrow$} & \multicolumn{1}{r}{MC $\uparrow$} & \multicolumn{1}{r}{RP $\uparrow$} & \multicolumn{1}{r}{\ INS $\uparrow$} & \multicolumn{1}{r}{\ DEL $\downarrow$} & \multicolumn{1}{r}{\ NEG $\uparrow$} & \multicolumn{1}{r}{\ POS $\downarrow$} & \multicolumn{1}{r}{\ \ IROF $\uparrow$} & \multicolumn{1}{r}{\ INF $\uparrow$} \\
    \midrule
    \multirow{3}[1]{*}{OCT+DeiT} & $\mathcal{L}_{\rm OBJ}$  & \textcolor{red}{\textbf{\ \ 0.217}} & \textcolor{red}{\textbf{\ \ 0.475}} & \textcolor{red}{\textbf{\ \ 0.897}} & \textcolor{red}{\textbf{\ \ 0.643}} & \textcolor{red}{\textbf{0.944}} & \textcolor{red}{\textbf{0.356}} & \textcolor{red}{\textbf{0.917}} & \textcolor{red}{\textbf{0.368}} & \textcolor{red}{\textbf{0.638}} & \textcolor{red}{\textbf{0.089}} \\
          & $\mathcal{L}_{\rm PC\ \ }$    & 0.032 & 0.231 & 0.655 & 0.540 & 0.913 & 0.463 & 0.904 & 0.521 & 0.534 & 0.031 \\
          & $\mathcal{L}_{\rm LC\ \ }$    & 0.101 & 0.104 & 0.240 & 0.169 & 0.763 & 0.830 & 0.809 & 0.813 & 0.162 & 0.023 \\
    \midrule
    \multirow{3}[1]{*}{ImageNet+DeiT} & $\mathcal{L}_{\rm OBJ}$  & \textcolor{red}{\textbf{\ \ 0.026}} & \textcolor{red}{\textbf{\ \ 0.447}} & \textcolor{red}{\textbf{\ \ 0.884}} & \textcolor{red}{\textbf{\ \ 0.486}} & \textcolor{red}{\textbf{0.568}} & \textcolor{red}{\textbf{0.127}} & \textcolor{red}{\textbf{0.417}} & \textcolor{red}{\textbf{0.295}} & \textcolor{red}{\textbf{0.672}} & \textcolor{red}{\textbf{0.014}} \\
          & $\mathcal{L}_{\rm PC\ \ }$    & 0.022 & 0.364 & 0.823 & 0.456 & 0.501 & 0.185 & 0.406 & 0.366 & 0.638 & 0.008 \\
          & $\mathcal{L}_{\rm LC\ \ }$    & -0.047 & -0.051 & 0.033 & 0.373 & 0.552 & 0.380 & 0.397 & 0.414 & 0.493 & {-0.037\ } \\
    \midrule
    \multirow{3}[1]{*}{IMDb+Transformer} & $\mathcal{L}_{\rm OBJ}$  & \textcolor{red}{\textbf{\ \ 0.162}} & \textcolor{red}{\textbf{\ \ 0.495}} & \textcolor{red}{\textbf{\ \ 0.203}} & \textcolor{red}{\textbf{\ \ 0.759}} & \textcolor{red}{\textbf{0.806}} & \textcolor{red}{\textbf{0.189}} & \textcolor{red}{\textbf{0.799}} & \textcolor{red}{\textbf{0.205}} & \textcolor{red}{\textbf{0.742}} & \textcolor{red}{\textbf{0.047}} \\
          & $\mathcal{L}_{\rm PC\ \ }$    & 0.058 & 0.358 & 0.195 & 0.718 & 0.784 & 0.192 & 0.775 & 0.344 & 0.655 & 0.038 \\
          & $\mathcal{L}_{\rm LC\ \ }$    & 0.023 & 0.235 & 0.167 & 0.316 & 0.667 & 0.708 & 0.738 & 0.652 & 0.223 & 0.013 \\
    \midrule
    \multirow{3}[1]{*}{NAP+MLP} & $\mathcal{L}_{\rm OBJ}$  & \textcolor{red}{\textbf{0.788}} & \textcolor{red}{\textbf{0.763}} & \textcolor{red}{\textbf{0.952}} & \textcolor{red}{\textbf{0.957}} & \textcolor{red}{\textbf{0.844}} & \textcolor{red}{\textbf{0.031}} & \textcolor{red}{\textbf{0.770}} & \textcolor{red}{\textbf{0.031}} & \textcolor{red}{\textbf{0.844}} & \textcolor{red}{\textbf{0.238}} \\
          & $\mathcal{L}_{\rm PC\ \ }$    & 0.674 & 0.671 & 0.558 & 0.424 & 0.358 & 0.514 & 0.227 & 0.541 & 0.361 & 0.025 \\
          & $\mathcal{L}_{\rm LC\ \ }$    & 0.748 & 0.515 & 0.535 & 0.426 & 0.360 & 0.442 & 0.512 & 0.124 & 0.638 & 0.135 \\
    \bottomrule
    \end{tabular}%
    }
  \caption{Ablation study of \textbf{DeepFaith} on explanation tasks across different modalities. The table reports the average scores over ten faithfulness evaluation metrics, where $\mathcal{L}_{\rm OBJ}$ denotes the explainer trained with both loss terms.}
  \label{tabAblation}%
\end{table*}%

\subsection{Generating Supervised Explanation Signals}
Given a specific dataset and model, \textbf{DeepFaith} generates high-quality input-saliency explanation pairs before training. Taking the task of explaining DeiT’s predictions on ImageNet as an example, we use 14 widely adopted explanation methods from Captum \cite{kokhlikyan2020captum} to generate patch-level explanations for 20,000 validation samples. Each explanation is evaluated using 10 faithfulness metrics (detailed in Appendix D) from our unified framework.

Figure \ref{faithfulness} illustrates the faithfulness-based filtering process of the supervised explanation signals for one sample. For each evaluation metric, we compute the $p$-quantile and remove explanations deemed unfaithful by any of the metrics. Detailed processes for all explanation tasks are provided in Appendix E.

\subsection{Training Faithful Saliency Explainer}

We use a multi-layer Transformer Encoder as the explainer for its strength in processing sequential inputs. It encodes patch-based images, tokenized text, or tabular rows, followed by a normalized linear layer projecting to an $n$-dimensional saliency explanation. The weight $\alpha$ is scheduled as a sigmoid function of the epoch. Task-specific configurations are in Appendix F.

We split the supervised explanation signals into training and test sets and train the explainer. For each explanation task, we compare the faithfulness of \textbf{DeepFaith} against other baseline explanation methods. Each explanation is scored using all ten faithfulness metrics and averaged across all test samples (see Appendix G for full results). To concisely summarize the overall explanation quality of each method, we rank all explanation methods under each metric and report their average rankings.

Table \ref{faithfulrank} presents the evaluation results across all explanation tasks. \textbf{DeepFaith} consistently achieves the highest faithfulness, demonstrating that our method can generate higher-quality explanations than baseline methods across various modalities.

\begin{table}[htbp]
  \centering
{\fontsize{9pt}{10pt}\selectfont
\setlength{\tabcolsep}{1pt}
  \begin{tabular}{l|c|c|cc|c}
    \toprule
    \multirow{2}[2]{*}{{Method}} & \multicolumn{1}{c|}{ImageNet} & \multicolumn{1}{c|}{OCT} & \multicolumn{2}{c|}{AGNews} & \multicolumn{1}{c}{NAP}\\
    \cmidrule(lr){2-2} \cmidrule(lr){3-3} \cmidrule(lr){4-5} \cmidrule(lr){6-6} 
     & DeiT  & ResNet & LSTM & Transformer & MLP  \\
    \midrule
    DeepFaith & \textcolor{red}{\textbf{\ \ \ \ 3.103}} & \textcolor{red}{\textbf{\ \ \ \ 2.103}} & \textcolor{red}{\textbf{\ \ \ \ 1.217}} & \textcolor{red}{\textbf{\ \ \ \ 0.433}} & \textcolor{red}{\textbf{\ \ \ \ 0.117}} \\
    Integrated Grads & \ \ 95.132  & 103.721  & \ \ 53.941 & \ \ 58.473 & \ \ \ \ 2.839  \\
    DeepLIFT & \ \ 14.918 & \ \ 15.003 & \ \ \ \ 3.101 & \ \ \ \ 0.849 & \ \ \ \ 0.272  \\
    Saliency & \ \ 11.264  & \ \ \ \ 8.548 & \ \ \ \ 5.894 & \ \ \ \ 0.682 & \ \ \ \ 0.254  \\
    Occlusion & 115.435  & 170.348 & \ \ 61.725 & \ \ 25.734 & \ \ \ \ 0.563  \\
    LIME & 121.143   & \ \ 93.352 & \ \ 79.311 & 112.438 & \ \ 16.125 \\
    Kernel SHAP & \ \ 68.946  & \ \ 63.114 & \ \ 79.645 & 106.965 & \ \ 37.575 \\
    Grad-CAM & \ \ 13.756    & \ \ \ \ 9.617 & \textcolor{gray}{\ \ N/A} & \textcolor{gray}{\ \ N/A} & \textcolor{gray}{\ \ N/A}  \\
    Grad-CAM++ & \ \ \ \ 6.048   & \ \ \ \ 3.769 & \textcolor{gray}{\ \ N/A} & \textcolor{gray}{\ \ N/A} & \textcolor{gray}{\ \ N/A} \\
    Expected Grads & 124.935 & 122.261 & \textcolor{gray}{\ \ N/A} & \textcolor{gray}{\ \ N/A} & \textcolor{gray}{\ \ N/A} \\
    \bottomrule
  \end{tabular}
  }
  \caption{Average runtime (in ms) of \textbf{DeepFaith} and baseline methods for explaining a single sample.}
  \label{time}
\end{table}

\subsection{Visualization of DeepFaith Explanations}
Visualization bridges model predictions and human understanding, playing a key role in evaluating explanation methods. Figure \ref{visualization} illustrates explanations generated by \textbf{DeepFaith} across three modalities for well-trained models.

In Figure~\ref{cv}, we present two representative samples from the OCT and ImageNet datasets, along with visualizations from other methods. \textbf{DeepFaith}’s attributions are sharply focused on semantically meaningful regions with high visual clarity. Figure~\ref{nlp} shows two IMDb movie reviews predicted as positive and negative, with green highlights indicating the most influential words. \textbf{DeepFaith} emphasizes sentiment-consistent words in both reviews. In Figure~\ref{tab}, results on the NAP task show \textbf{DeepFaith} correctly attributes age as the dominant predictive feature rather than gender.

\subsection{Runtime Comparison}
Unlike classical post-hoc attribution methods that explain one instance at a time, \textbf{DeepFaith} incurs upfront costs for signal generation and explainer training. However, once trained, it serves as a high-performance explainer with comparable runtime, suitable for latency-critical scenarios such as stock trading and battlefield target acquisition.

Table \ref{time} reports the average per-sample explanation time (ms) across 5 tasks (full results in Appendix I). \textbf{DeepFaith} exhibits significantly lower latency than sampling-based methods like LIME, Kernel SHAP, and Occlusion, and also outperforms gradient-based methods such as Grad-CAM, Grad-CAM++, and Integrated Grads. This efficiency stems from its ability to decouple runtime from the architecture of the model being explained.

\subsection{Ablation Study}

We conduct ablation studies across all explanation tasks (full results in Appendix J) to evaluate the individual impact of each loss on \textbf{DeepFaith}’s performance. For each task, the explainer is trained for equal epochs under three settings: both losses, only $\mathcal{L}_{\rm PC}$, and only $\mathcal{L}_{\rm LC}$. We then report average faithfulness across all metrics.

Table \ref{tabAblation} presents results from 4 representative tasks, revealing a clear pattern: training with only $\mathcal{L}_{\rm PC}$ yields moderately faithful explanations but is limited by baseline methods, while using only $\mathcal{L}_{\rm LC}$ causes early optimization struggles and failure to converge. These outcomes align with our theoretical analysis.

\section{Conclusion}
\textbf{DeepFaith} is a domain-free and model-agnostic unified framework for training an explainer that leverages high-quality supervised explanation signals and theoretically grounded objectives to generate highly faithful explanations in a single forward pass. Moreover, it is \textit{highly extensible}: the baseline explanation methods employed to generate supervised signals can be substituted with any newly proposed techniques, whose processing can accommodate diverse engineering strategies; furthermore, the explainer architecture may comprise any deep neural network capable of handling sequential inputs. This flexibility suggests that \textbf{DeepFaith} has the potential to drive the emergence of a new paradigm for explainability, evolving alongside the development of the field.

\newpage
\includepdf[pages=-]{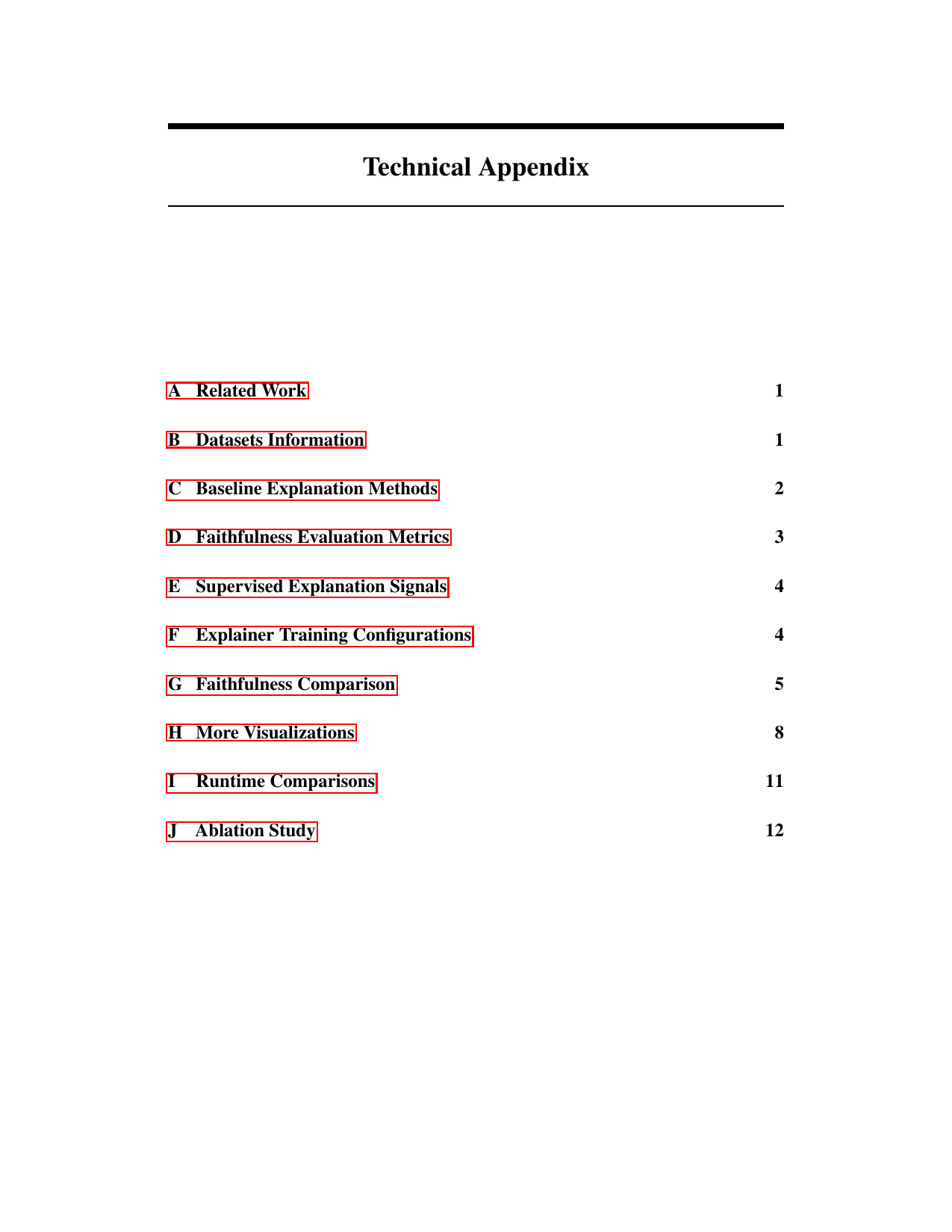}

\bibliography{aaai2026}

\end{document}